\documentclass[letterpaper, 10 pt, journal, twoside]{IEEEtran}  
\IEEEoverridecommandlockouts 

\usepackage{amsthm,amsmath,amsfonts}
\usepackage{algorithmic}
\usepackage{algorithm}
\usepackage{array}
\usepackage[caption=false,font=normalsize,labelfont=sf,textfont=sf]{subfig}
\usepackage{textcomp}
\usepackage{stfloats}
\usepackage{url}
\usepackage{verbatim}
\usepackage{graphicx}
\usepackage{cite}
\usepackage{caption}
\usepackage{array}
\usepackage[table]{xcolor}
\usepackage[font=scriptsize,labelfont=bf]{caption}
\usepackage{hyperref}

\usepackage{todonotes}
\usepackage{booktabs}
\usepackage{wrapfig}

\usepackage{enumitem}
\usepackage[normalem]{ulem}

\newtheorem{theorem}{Theorem}

\newcommand{\edit}[1]{\textcolor{black}{#1}}

\newcommand{\pose}{\ensuremath{x}}
\newcommand{\poses}{\ensuremath{\mathcal{X}}}
\newcommand{\terrainFunc}{\ensuremath{T}}
\newcommand{\terrainFuncSet}{\ensuremath{\mathbb{T}}}
\newcommand{\terrains}{\ensuremath{\mathcal{T}}}
\newcommand{\humanFunc}{\ensuremath{H}}
\newcommand{\humanFuncSet}{\ensuremath{\mathcal{H}}}
\newcommand{\observationFunc}{\ensuremath{O}}
\newcommand{\images}{\ensuremath{\mathcal{I}}}
\newcommand{\image}{\ensuremath{I}}

\newcommand{\prefFunction}{\ensuremath{R}}
\newcommand{\pacerFunction}{\ensuremath{\hat{R}}}
\newcommand{\costmaps}{\ensuremath{\mathcal{C}}}
\newcommand{\costmap}{\ensuremath{C}}

\captionsetup[table]{labelsep=colon, name=Tab.}



\begin{document}

\title{\textsc{pacer}: Preference-conditioned All-terrain Costmap Generation}

\author{Luisa Mao$^{1}$, Garrett Warnell$^{1,2}$, Peter Stone$^{1,3}$, Joydeep Biswas$^{1,4}$

\thanks{Manuscript received: October, 19, 2024; Revised January, 21, 2025; Accepted Feburary, 16, 2025.}
\thanks{This paper was recommended for publication by Editor Cesar Cadena upon evaluation of the Associate Editor and Reviewers' comments.} 

\thanks{\scriptsize $^{1}$Luisa Mao, Garrett Warnell, Peter Stone, and Joydeep Biswas are with the Department of Computer Science, The University of Texas at Austin, Austin, TX, USA.
        {\tt\scriptsize luisa.mao@utexas.edu}}%
\thanks{\scriptsize $^{2}$Garrett Warnell is also with DEVCOM Army Research Laboratory, Austin, TX, USA.
        {\tt\scriptsize garrett.a.warnell.civ@army.mil}}%
\thanks{\scriptsize $^{3}$Peter Stone is also with Sony AI,  Boston, MA 02129, USA.
        {\tt\scriptsize pstone@cs.utexas.edu}}%
\thanks{\scriptsize $^{4}$Joydeep Biswas is also with NVIDIA, Santa Clara, CA 95051, USA.
        {\tt\scriptsize joydeepb@cs.utexas.edu}}%
\thanks{Digital Object Identifier (DOI): see top of this page.}
}

\markboth{IEEE Robotics and Automation Letters. Preprint Version. Accepted March, 2025}
{Mao \MakeLowercase{\textit{et al.}}: PACER}

\maketitle
\begin{abstract}

In autonomous robot navigation, terrain cost assignment is typically performed using a semantics-based paradigm in which terrain is first labeled using a pre-trained semantic classifier and costs are then assigned according to a user-defined mapping between label and cost. While this approach is rapidly adaptable to changing user preferences, only preferences over the types of terrain that are already known by the semantic classifier can be expressed. In this paper, we hypothesize that a machine-learning-based alternative to the semantics-based paradigm above will allow for rapid cost assignment adaptation to preferences expressed over {\em new} terrains at deployment time without the need for additional training. To investigate this hypothesis, we introduce and study \textsc{pacer}, a novel approach to costmap generation that accepts as input a single birds-eye view (BEV) image of the surrounding area along with a user-specified {\em preference context} and generates a corresponding BEV costmap that aligns with the preference context. Using a staged training procedure leveraging real and synthetic data, we find that \textsc{pacer} is able to adapt to new user preferences at deployment time while also exhibiting better generalization to novel terrains compared to both semantics-based and representation-learning approaches. We release our code and dataset at \footnotesize{\href{https://github.com/ut-amrl/PACER_RAL_2025.git}{https://github.com/ut-amrl/PACER\_RAL\_2025.git}}

\end{abstract}

\begin{IEEEkeywords}
Vision-based navigation, Deep Learning for Visual Perception
\end{IEEEkeywords}

\section{Introduction and Related Work}
\label{sec:introduction}
    \IEEEPARstart{R}{obust} autonomous navigation in a wide variety of environments is a long-standing goal in robotics. While there has been significant progress in collision-free navigation~\cite{xiao2022autonomous,xiao2023autonomous}, successful navigation in human environments additionally requires alignment with human preferences, e.g., preferring to cross a busy street at a crosswalk even if doing so results in a longer path~\cite{karnan2022socially,Meng-RSS-23}.

\edit{In this paper, we examine how robots can assign terrain costs that align with human preferences for terrain-aware navigation. This specific focus represents a special case of the broader challenge of human preference-aligned navigation~\cite{choi2020fast}. Such alignment is crucial not only for terrain-aware navigation, but also for adherence to constraints like social norms~\cite{8462900,wulfmeier2016incorporating}.
An alternative to learning to predict navigation \emph{costs} is to directly learn navigation \emph{policies} aligned with human 
 preferences~\cite{kahn2021badgr,kahn2021land,raj2024rethinking}. We choose to focus on predicting navigation costs due to the ease of integration with existing cost-based navigation planners~\cite{580977,7487277,Macenski_2023}. 
While the focus of this work is on terrain-based cost evaluation, we recognize that a full navigation system would incorporate additional costs using established methods such as layered costmaps~\cite{lu2014layered}}.
    

    We are particularly interested in terrain cost assignment approaches that can rapidly adapt to newly-expressed terrain preferences.  \edit{Prevalent approaches to incorporate human preference into navigation} such as inverse reinforcement learning (IRL) and preference-based IRL (PbIRL) based on terrain patch clusters do not admit this type of rapid adaptation due to the amount of additional data required to express new preferences~\cite{karnan2023sterling,sikand2022visual,karnan2024wait}. \edit{Instead, to the best of our knowledge, existing solutions for rapid adaptation to preferences rely on first segmenting terrains into a prescribed set of classes, and assigning each class a manually-specified cost~\cite{Meng-RSS-23,jiang2021rellis,guan2022ga}. While such approaches allows rapid adaptation to new preferences, they are restricted to expressing preferences over the pre-defined list of terrain classes known to the segmentation algorithm.}

    \begin{figure}
        \centering
        \includegraphics[width=\linewidth]{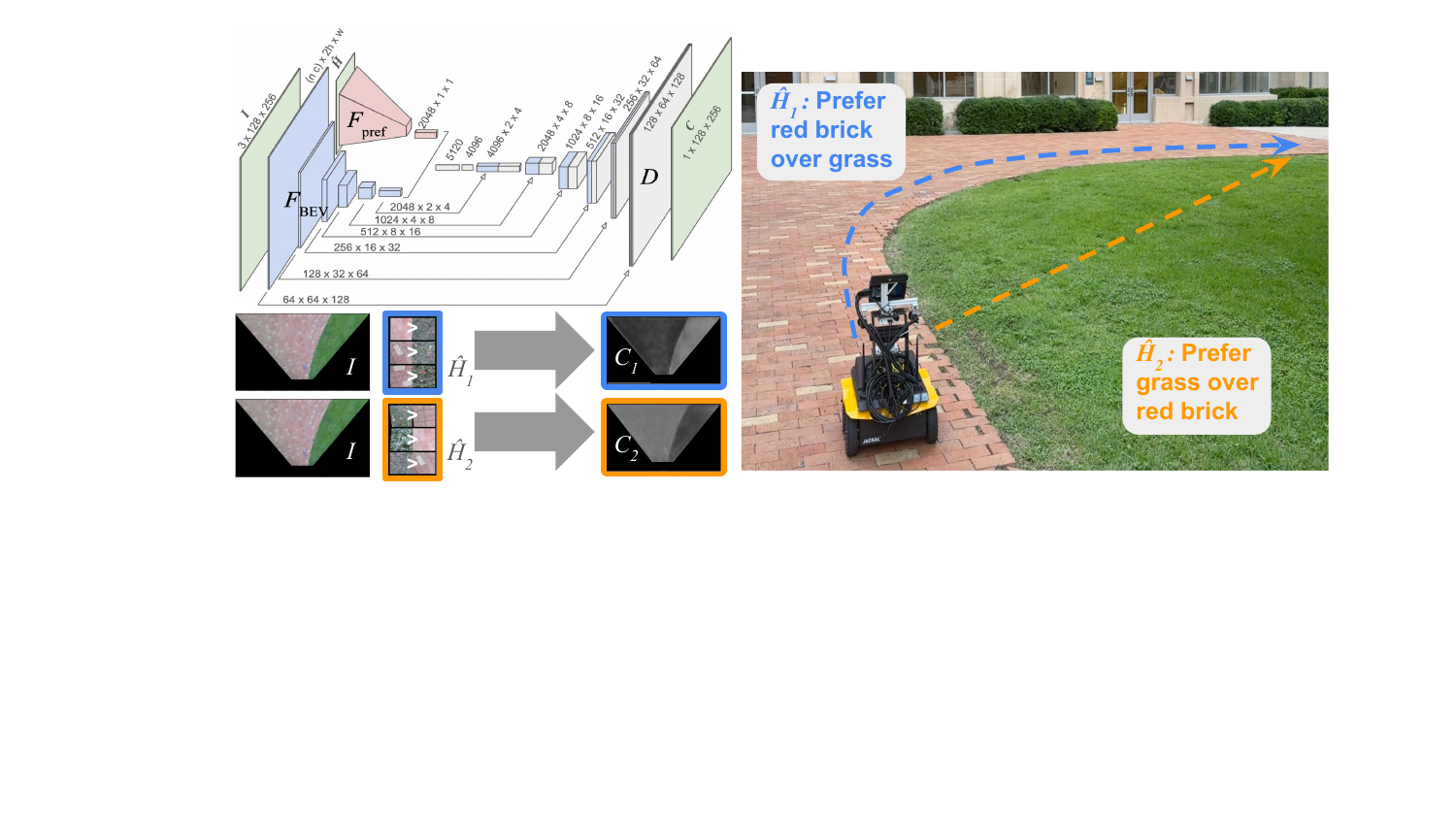}
        \caption{
        Given an input image \image{} and a preference context $\hat{\humanFunc{}}$ of $n$ ordered pairs of terrain patches where the left terrain is more preferred than the right, \textsc{pacer} generates a costmap consistent with this preference. Changing the preference context leads to changed terrain costs, which results in a different plan aligned to the new operator preference.
        The paths planned according to the different preferences are shown above. In the costmap, black represents low cost and white represents high cost.
        \label{fig:plan}}
        \vspace{-.5cm}
    \end{figure}

    \edit{
    \emph{Representation learning} is a more recent approach to terrain-aware navigation that allows preferences to no longer be limited to terrains with predefined labels \cite{karnan2023sterling, yao2022rca, zurn2020self, sikand2022visual}. Patch-based representation learning methods for terrain understanding typically involve mapping small square patches from the bird’s-eye view (BEV) to a representation vector that is further converted into a scalar cost value. Although continuous representation spaces are theoretically generalizable to new terrains, training such a space effectively is challenging in practice, as even humans may struggle to identify terrain types from small patches in the presence of homography artifacts or difficult lighting conditions. A further limitation is that each new terrain preference ordering necessitates retraining the utility function in the representation space, making these approaches less adaptable to changing preferences.}

    Towards overcoming the limitations of the semantics-based and representation-learning paradigms to terrain cost assignment, we propose and study \textsc{pacer}, a novel approach to costmap generation that accepts as input a single birds-eye view (BEV) image of the surrounding terrain along with a user-specified preference context and generates a corresponding BEV cost map that aligns with that preference context (see Fig. \ref{fig:plan}). By {\em preference context}, we mean a small set of terrain patches and pairwise preferences over those patches that are supplied at deployment time. We design \textsc{pacer} to exhibit three design desiderata: {\em (1)} it is capable of representing a prior over terrain preferences; {\em (2)} it is capable of adapting to a wide variety of preference contexts; and {\em (3)} it is able to assign aligned costs to terrains that appear in both the preference context and the BEV image, even for novel terrain types.

    Using real and synthetic terrain data, we implement a training pipeline to realize these three properties and evaluate the resulting preference-conditioned costmap functions over a wide variety of BEV images. Additionally, we study the impact of the resulting costmaps on cost-optimal navigation behavior with respect to adherence to human preferences. We find that our method overcomes limitations in prior works by being easily adaptable to new operator preferences and producing fine-grained costmaps that illicit desirable navigation behaviors even in previously unseen environments.

\section{The Terrain-Aware Preference-Aligned Planning Problem}
\label{sec:problem_formulation}

\begin{figure}
    \vspace{.3cm}
    \centering
        \includegraphics[width=\linewidth]{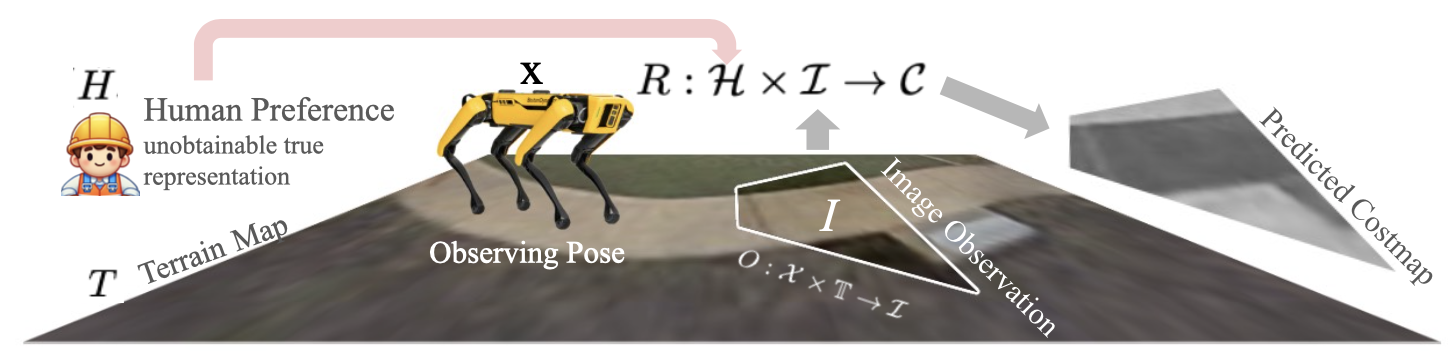}
        \caption{Relationships between spaces of Terrains, Image Observations, and Costmaps. There exists a hidden ``true" costing function based on human preferences directly on terrains. \textsc{pacer} approximates this function from visual observations of terrains.
        }
        \label{fig:function_diagram}
        \vspace{-.6cm}

\end{figure}
We now develop the terrain-aware preference-aligned planning problem. We will first formulate the path planning problem, and then we will discuss the problem of learning preference-aligned terrain costs.

\subsection{Path Planning}

In this paper, we are concerned with the general problem of planning a path in a robot state space $\poses{}$ ($\mathrm{SE}(2)$ for ground vehicles) from a start and goal pose $\pose{}_1, G \in \poses{}$ as the problem of finding the finite trajectory $\Gamma_S = [\pose{}_1, ..., \pose_S]$ consisting of $S$ states $\pose{} \in \poses{}$ which minimizes a total objective function
\begin{align}
    \Gamma_S = \arg_{\Gamma} \min ||\pose{}_S-G|| + \lambda \mathcal{J}(\Gamma),
\end{align}
where $||\pose{}_S-G||$ is the distance between the final state $\pose{}_S$ and $G$, and $\mathcal{J}(\Gamma)$ is the cost function scaled by the relative weight $\lambda$.

A cost function $\mathcal{J}(\Gamma)$ may include various terms such as the geometric cost of obstacles, social navigation cost, or terrain cost,
\begin{align}
    \mathcal{J}(\Gamma) = \mathcal{J}_{\text{geometric}}(\Gamma) + \alpha \mathcal{J}_{\text{social}}(\Gamma) + \beta \mathcal{J}_{\text{terrain}}(\Gamma)
\end{align}
where $\alpha, \beta$ are relative weights. This paper is concerned with the terrain cost term $\mathcal{J}_{\text{terrain}}(\Gamma)$ of the general function.

\subsection{Preference-Aligned Terrain Costs}
To better understand preference-aligned terrain costs, we first introduce a fixed terrain function to represent the spatial distribution of the terrains in the world.
Let a terrain map \(\terrainFunc: \poses{} \rightarrow \terrains{}\) be a function that maps a robot pose $\pose \in \poses$ to the terrain $\tau \in \terrains$ that the robot interacts with when in pose \pose{}, where a terrain $\tau$ captures all the properties of the ground relevant to robot navigation.

Additionally, we assume the human has an unknown true cost function $\humanFunc{}:\terrains{}\rightarrow \mathbb{R}^{0+}$ mapping terrains to scalar real-valued costs based on their preferences.
This cost function is influenced by various factors, including the personal preferences of human operator, the environment, and the task at hand.
Let \humanFuncSet{} denote the continuous space of such cost functions, such that \(\humanFunc{} \in \humanFuncSet{}\).

For terrain-aware navigation, the robot relies on its visual observations to infer terrain-specific costs.
We assume that these observations arrive in the form of images generated according to a black-box observation function $\observationFunc{}: \poses{} \times \terrainFuncSet{} \to \images{}$, i.e., $I = O(x, T)$, where $x$ is the observing pose of the robot, $T$ is a terrain map, \terrainFuncSet{} is the space of terrain maps, and \images{} is the space of images.
In practice, most methods operate on synthetic birds-eye-views generated from the original camera images. BEV images can be generated via static ground-plane homography \cite{sikand2022visual, karnan2023sterling}, or a BEV accumulation algorithm \cite{miki2022elevation}. Henceforth, we define input images to be BEV images.
We assume that the the visual appearance of the terrain provides sufficient information for the robot to perform terrain-aware navigation. The observation function is thus fixed, but unknown to the robot.

During planning, the terrain cost of a pose is found using a costmap $\costmap{}:\poses{}\to \mathcal{R}^{0+}$ that maps from robot poses to costs.
We introduce a costmap generation function $\prefFunction{}: \images{} \times \humanFuncSet{} \to \costmaps{}$ as the function
mapping from the space of images \images{} to the space of costmaps \costmaps{}, conditioned on an unknown human cost function that belongs to
\humanFuncSet{}.

Since the robot has no direct access to the terrain map $\terrainFunc{}$ and there is no clear representation of $\humanFunc{}$, the terrain-aware preference-aligned planning problem is thus to learn the function \prefFunction{} such that, given an image observation of terrain, the optimal trajectory planned with respect to \prefFunction{} is also optimal with respect \humanFunc{}. The conditions in the next section will be introduced as our analyses of how we address this problem.

\section{Necessary Conditions for Preference-Aligned Navigation}
\label{sec:necessary_conditions}
Seeking training tasks to help us compute valid preference-conditioned costmap functions $R(\cdot | H)$, we now state a set of necessary conditions for these tasks to produce costmaps that are consistent with human preferences for terrain.

In particular, we will state conditions for \emph{equivalence} and \emph{partial ordering}, and we will show that $R$s that produce costmaps that yield optimal trajectories consistent with a human preference must obey these conditions.

Let \(\prefFunction{}(\cdot | \humanFunc{})\) denote a costmap generated according to an $\humanFunc{} \in \humanFuncSet{}$, and $\costmap{}|_\pose{}$ denote that costmap \(\costmap{} \in \costmaps{} \) is evaluated at pose \pose{}. For a generated costmap \(\prefFunction{}(\cdot | \humanFunc{})\) to be consistent with $\humanFunc{}$, we specify it must exhibit both \emph{equivalence} and \emph{partial ordering}.
By equivalence, we mean that the terrains at two poses are given the same cost by \humanFunc{} if and only if the costmap generated by \prefFunction{} from an image observation and evaluated at those two poses have equal cost, i.e.,
\begin{align}
    \humanFunc{}_i(\terrainFunc(\pose_1)) &= \humanFunc{}_i(\terrainFunc(\pose_2)) \iff \prefFunction{}(\observationFunc{}(\cdot, \terrainFunc)\mid \humanFunc{}_i) \Big|_{\pose_1} \notag \\
    &= \prefFunction{}(\observationFunc{}(\cdot, \terrainFunc) \mid \humanFunc{}_i) \Big|_{\pose_2}  \quad \forall \pose_1, \pose_2, \humanFunc{}_i  \tag{NC1} \; . \label{eq:first}
\end{align}
By partial ordering, we mean that \humanFunc{} assigns a preference order over the terrains at two poses if and only if the costmap generated by \prefFunction{} from an image observation assigns those two poses the same preference order, i.e.,
\begin{align}
    \humanFunc{}_i(\terrainFunc(\pose_1)) &< \humanFunc{}_i(\terrainFunc(\pose_2)) \iff \prefFunction{}(\observationFunc{}(\cdot, \terrainFunc) \mid \humanFunc{}_i) \Big|_{\pose_1} \notag \\
    &< \prefFunction{}(\observationFunc{}(\cdot, \terrainFunc) \mid \humanFunc{}_i) \Big|_{\pose_2} \quad \forall \pose_1, \pose_2, \humanFunc{}_i \tag{NC2} \; . \label{eq:second}
\end{align}

We let $O(\cdot,T)$ denote an image observation captured from any observing pose from which $x_1, x_2$ are visible.

We now provide a brief proof that \eqref{eq:first}, \eqref{eq:second} are necessary for aligning the preferences of $\humanFunc{}_i$ with $\prefFunction{}$. Specifically, if the most optimal path with respect to $\prefFunction{}(\cdot \mid \humanFunc{}_i)$ has the same optimal cost when evaluated with $\humanFunc{}_i$, then the conditions \eqref{eq:first},\eqref{eq:second} must hold. For a trajectory $\Gamma$ composed of discrete poses, let the cumulative cost function for the human's evaluation be denoted by $\humanFunc{}|_{\Gamma} \equiv \sum_{\pose_i \in \Gamma} \humanFunc{} (T(\pose{}_i))$. Similarly, let the cumulative cost function for the generated costmap be denoted as $\prefFunction{} | _{\Gamma} \equiv \sum_{\pose_i \in \Gamma} \prefFunction{} (\observationFunc{}(\cdot, T) \mid \humanFunc{}) | _{\pose_i}$. Given this setup, the following theorem establishes the necessity of the conditions \eqref{eq:first}, \eqref{eq:second} such that $\humanFunc{} | _{\Gamma^\ast} = \prefFunction{} | _{\overline{\Gamma}}$.

\begin{theorem}
Let \(\Gamma ^ \ast = \arg_{\Gamma}  \min\humanFunc{}|_{\Gamma}, \quad \overline{\Gamma} = \arg_{\Gamma}  \min \prefFunction{} | _{\Gamma} \)
denote the optimal trajectories with respect to $\humanFunc{}$ and $\prefFunction{}$ respectively.
If the optimal trajectory with respect to \prefFunction{} has equal cost to the optimal path with respect to \humanFunc{} when both are evaluated on \humanFunc{} such that $\humanFunc{} | _{\Gamma^\ast} = \prefFunction{} | _{\overline{\Gamma}}$,
then conditions \eqref{eq:first} and \eqref{eq:second} hold.

\end{theorem}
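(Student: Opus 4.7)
The plan is to prove the theorem by contrapositive: assume one of \eqref{eq:first} or \eqref{eq:second} fails for some poses $\pose{}_1, \pose{}_2$, terrain $\terrainFunc{}$, and human cost $\humanFunc{}_i$, and then construct a concrete planning scenario in which the $\humanFunc{}_i$-cost of the $\prefFunction{}$-optimal trajectory differs from that of the $\humanFunc{}_i$-optimal trajectory, contradicting the hypothesized equality $\humanFunc{} | _{\Gamma^\ast} = \prefFunction{} | _{\overline{\Gamma}}$ under the additive cumulative-cost convention introduced just before the theorem. Throughout, I will use the assumption that the observation function $\observationFunc{}(\cdot, \terrainFunc{})$ provides information sufficient to distinguish the two relevant poses in any constructed terrain.

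For the partial-ordering condition \eqref{eq:second}, I suppose it is violated, so without loss of generality $\humanFunc{}_i(\terrainFunc{}(\pose{}_1)) < \humanFunc{}_i(\terrainFunc{}(\pose{}_2))$ while $\prefFunction{}(\observationFunc{}(\cdot,\terrainFunc{}) \mid \humanFunc{}_i) | _{\pose{}_1} \geq \prefFunction{}(\observationFunc{}(\cdot,\terrainFunc{}) \mid \humanFunc{}_i) | _{\pose{}_2}$. I then set up a planning instance with exactly two admissible trajectories, $\Gamma^{(1)}$ passing through $\pose{}_1$ and $\Gamma^{(2)}$ passing through $\pose{}_2$, identical on all remaining poses so that those shared poses contribute the same additive cost under both $\humanFunc{}_i$ and $\prefFunction{}$. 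Under $\humanFunc{}_i$, $\Gamma^{(1)}$ is then strictly optimal, while under $\prefFunction{}$ the trajectory $\Gamma^{(2)}$ is at least as good and can be selected as the $\arg\min$. Evaluating this $\overline{\Gamma}$ on $\humanFunc{}_i$ gives a strictly larger value than $\humanFunc{}_i | _{\Gamma^\ast}$, contradicting the hypothesis.

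For the equivalence condition \eqref{eq:first}, I split into two sub-cases. If $\prefFunction{}$ assigns equal cost at $\pose{}_1, \pose{}_2$ but $\humanFunc{}_i$ does not, the argument collapses to the NC2 construction above: $\prefFunction{}$ is indifferent between $\Gamma^{(1)}$ and $\Gamma^{(2)}$, so tie-breaking can pin $\overline{\Gamma}$ to the $\humanFunc{}_i$-suboptimal trajectory and yield the same contradiction. The reverse direction, where $\humanFunc{}_i$ assigns equal cost but $\prefFunction{}$ does not, is the main obstacle, since both candidate paths are $\humanFunc{}_i$-optimal and a single two-path construction therefore produces no $\humanFunc{}_i$-suboptimality. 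To handle this case I would exploit the full numerical equality asserted by the hypothesis (and not merely that $\overline{\Gamma}$ is $\humanFunc{}_i$-optimal): applied to minimal planning problems whose unique feasible trajectory terminates at $\pose{}_1$ and at $\pose{}_2$ respectively, the hypothesis forces $\prefFunction{}(\observationFunc{}(\cdot,\terrainFunc{}) \mid \humanFunc{}_i) | _{\pose{}_k} = \humanFunc{}_i(\terrainFunc{}(\pose{}_k))$ for $k = 1, 2$, so equality of the right-hand sides propagates to equality of the left, completing the argument and establishing both necessary conditions.
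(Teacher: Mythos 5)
Your proposal is correct, and for \eqref{eq:second} and the ``\prefFunction{} equal but \humanFunc{} unequal'' half of \eqref{eq:first} it is essentially the paper's own argument recast in contrapositive form: your two-path planning instance in which the offending pair of poses lets the \prefFunction{}-minimizer be \humanFunc{}-suboptimal is exactly the paper's ``$\Gamma_2$ may be selected as $\overline{\Gamma}$'' contradiction. The genuine divergence is in the sub-case you rightly flag as the obstacle, $\humanFunc{}_i(\terrainFunc(\pose_1)) = \humanFunc{}_i(\terrainFunc(\pose_2))$ with unequal \prefFunction{} values. The paper never attacks this case by construction: it first proves the two path-level implications (a) $\humanFunc{}|_{\Gamma_1} < \humanFunc{}|_{\Gamma_2} \Rightarrow \prefFunction{}|_{\Gamma_1} < \prefFunction{}|_{\Gamma_2}$ and (b) its converse, derives the equality biconditional purely by contraposition of these, and only at the end specializes to single-state paths. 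You instead apply the hypothesis to degenerate planning problems whose unique feasible trajectory is a single pose to extract the numerical identity $\prefFunction{}(\observationFunc{}(\cdot,\terrainFunc)\mid\humanFunc{}_i)|_{\pose_k} = \humanFunc{}_i(\terrainFunc(\pose_k))$ (note the trajectory must consist of exactly that one pose, not merely terminate there, for the pointwise identity to drop out of the sum). This is legitimate under the same liberal quantification over planning instances that the paper itself uses, but it is over-powered: once you have pointwise numerical equality of \prefFunction{} and \humanFunc{}, both \eqref{eq:first} and \eqref{eq:second} follow immediately and your two-path constructions become redundant. The paper's contraposition route uses only the ordinal content of the hypothesis; yours buys brevity by leaning on the exact numerical equality. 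One small tidy-up: your ``without loss of generality'' for \eqref{eq:second} omits the violation pattern $\prefFunction{}|_{\pose_1} < \prefFunction{}|_{\pose_2}$ with $\humanFunc{}_i(\terrainFunc(\pose_1)) = \humanFunc{}_i(\terrainFunc(\pose_2))$; it is absorbed by your \eqref{eq:first} case analysis (the paper handles it with an explicit parenthetical in step (b)), but you should say so explicitly.
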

\begin{proof}
    Since \( \humanFunc{} | _{\Gamma^\ast} = \prefFunction{} | _{\overline{\Gamma}}\), we must have that:
\begin{enumerate}[label=(\alph*)]
    \item \(\humanFunc{} | _{\Gamma_1} < \humanFunc{} | _{\Gamma_2} \Rightarrow  \prefFunction{} | _{\Gamma_1} <  \prefFunction{} | _{\Gamma_2}\) for all paths $\Gamma_1, \Gamma_2$.

    Otherwise, there exist paths $\Gamma_1, \Gamma_2$ such that \(  \humanFunc{} | _{\Gamma_1} <  \humanFunc{} | _{\Gamma_2}\) and \( \prefFunction{} | _{\Gamma_1} \geq  \prefFunction{} | _{\Gamma_2} \). Then, $\Gamma_2$ may be selected as $\overline{\Gamma}$, but has greater cost than $\humanFunc{} | _{\Gamma^\ast}$ when evaluated on \humanFunc{}, which is a contradiction.

    \item \( \prefFunction{} | _{\Gamma_1} <  \prefFunction{} | _{\Gamma_2} \Rightarrow  \humanFunc{} | _{\Gamma_1} <  \humanFunc{} | _{\Gamma_2}\) for all paths $\Gamma_1, \Gamma_2$.

    Otherwise, there exist paths $\Gamma_1, \Gamma_2$ such that \( \prefFunction{} | _{\Gamma_1} <  \prefFunction{} | _{\Gamma_2} \) and \(  \humanFunc{} | _{\Gamma_1} >  \humanFunc{} | _{\Gamma_2}\) (by contraposition on (a), we eliminate the case where \( \prefFunction{} | _{\Gamma_1} <  \prefFunction{} | _{\Gamma_2} \) and \(  \humanFunc{} | _{\Gamma_1} =  \humanFunc{} | _{\Gamma_2}\)). Then, $\Gamma_1$ may be selected as $\overline{\Gamma}$, but may have greater cost than $\humanFunc{} | _{\Gamma^\ast}$ when evaluated on \humanFunc{}, which is a contradiction.

\end{enumerate}

By (a) and (b), we have that \( \humanFunc{} | _{\Gamma_1} <  \humanFunc{} | _{\Gamma_2} \iff  \prefFunction{} | _{\Gamma_1} <  \prefFunction{} | _{\Gamma_2}\).
By contraposition, we also have \( \humanFunc{} | _{\Gamma_1} = \humanFunc{} | _{\Gamma_2} \iff  \prefFunction{} | _{\Gamma_1} = \prefFunction{} | _{\Gamma_2}\).
Finally, since a path $\Gamma$ can consist of a single state, conditions \eqref{eq:first} and \eqref{eq:second} must also then hold.
\end{proof}

In the next section, we use conditions (\ref{eq:first}) and (\ref{eq:second}) to define training tasks for learning the optimal $R$ from data, which drives our proposed approach to the online generation of costmaps which result in preference-aligned navigation.

\section{Preference-Aligned All-Terrain Costmap Generation}
We now present our proposed approach for computing aligned terrain costmaps, which we refer to as Preference-aligned, All-terrain Costmap genERation (\textsc{pacer}).
\textsc{pacer} introduces the notion of a preference context and comprises several components, including a neural network architecture, and a data curation and training methodology based on the three design desiderata.
    
\subsection{Preference Context}
 \label{sec:preference_context}
The preference-aligned terrain costs discussed in Section \ref{sec:problem_formulation} depend on a human's cost function $H: \mathcal{T} \rightarrow \mathbb{R}^{0+}$.
Unfortunately, we do not have access to $H$ directly since it is known only to the human operator.
Therefore, we propose to obtain and utilize an approximate representation of $H$ that we call a \emph{preference context}.

We define a preference context $\hat{\humanFunc{}}$ as a set of $n$ image patch pairs $\widetilde{\image{}}\succ \widetilde{\image{}}'$ constructed from human input 

such that the human prefers the terrain observed in image $\widetilde{\image{}}$ over the terrain observed in $\widetilde{\image{}}'$, where an \(\widetilde{\image{}} \in \images{}\) is an observation of terrain as a small image patch. The small patch may be a part of a larger bird's-eye-view image of the ground. More specifically, $\hat{\humanFunc{}}$ consists of $n$ preferences derived from \(\humanFunc{}\) and is defined as \(\hat{\humanFunc{}} \equiv \{(\widetilde{\image{}}_{1} \succ \widetilde{\image{}}_{1}'), ...(\widetilde{\image{}}_{n} \succ \widetilde{\image{}}_{n}')\}\).
Fig \ref{fig:dataset} shows some example preference contexts with $n=3$ patch pairs and their corresponding costmaps.

In our implementation, the $n$ pairwise preferences are expressed using image patches of size $h \times w$.
$\hat{\humanFunc{}}$ is then represented by vertically concatenating the patches within a pair with the more-preferred terrain patch on top and forming a single $(n \cdot c) \times 2h \times w$ tensor, where $c$ is the number of color channels. \edit{Given a finite preference context, it is impossible to specify all pairwise preferences over the terrains, especially since the terrain set is continuous - hence it is not possible to specify a stronger sufficient condition that would guarantee that the algorithm generates human-aligned costs.}

\subsection {Model Architecture}
To generate costmaps, we propose to approximate functions $R: \mathcal{I} \times \mathcal{H} \rightarrow \mathcal{C}$, which require $H$ as input, with functions $\hat{R}: \mathcal{I} \times \hat{\mathcal{H}} \rightarrow \mathcal{C}$, where $\hat{\mathcal{H}}$ is the space of all preference contexts as defined above.
We model $\hat{R}$ as a neural network with a two encoders and a single decoder.
The input image is passed through a BEV image encoder $F_{\mathrm{BEV}}$ to form an image embedding, and, similarly, the input preference context is passed through a preference context encoder $F_{\mathrm{pref}}$ to form a preference embedding.
The output costmap is then generated by concatenating these embeddings and then passing them through a decoder $D$.
A visual depiction of this architecture is provided in Fig. \ref{fig:plan}.


\vspace{0.25cm}
\subsection{Loss Function}
\textsc{pacer} is trained using supervised machine learning, i.e., given a dataset $\mathcal{D} = \{(\hat{H}, \image{},\costmap{}_T)_i\}_{i=1}^N$ of preference context, image, and target costmap tuples, we seek the parameters $\phi$ of $\hat{R}$ that minimize a loss between the real and predicted costmaps.
More specifically, we seek $\phi^\ast$ such that
\begin{align}
    \phi^\ast = \arg\min_\phi \mathbb{E}_{(I,\hat{H},C_T) \sim \mathcal{D}}\left[ \ell\left(\hat{R}_\phi(I,\hat{H}),C_T\right) \right] \; ,
\end{align}
where we use the binary cross entropy loss averaged over each pixel as the loss function $\ell$.

\section{Dataset Curation and Training \textsc{pacer}}
    \begin{figure*}[htb] 
        \vspace{0.3cm}
        \centering
        \includegraphics[width=0.8\linewidth]{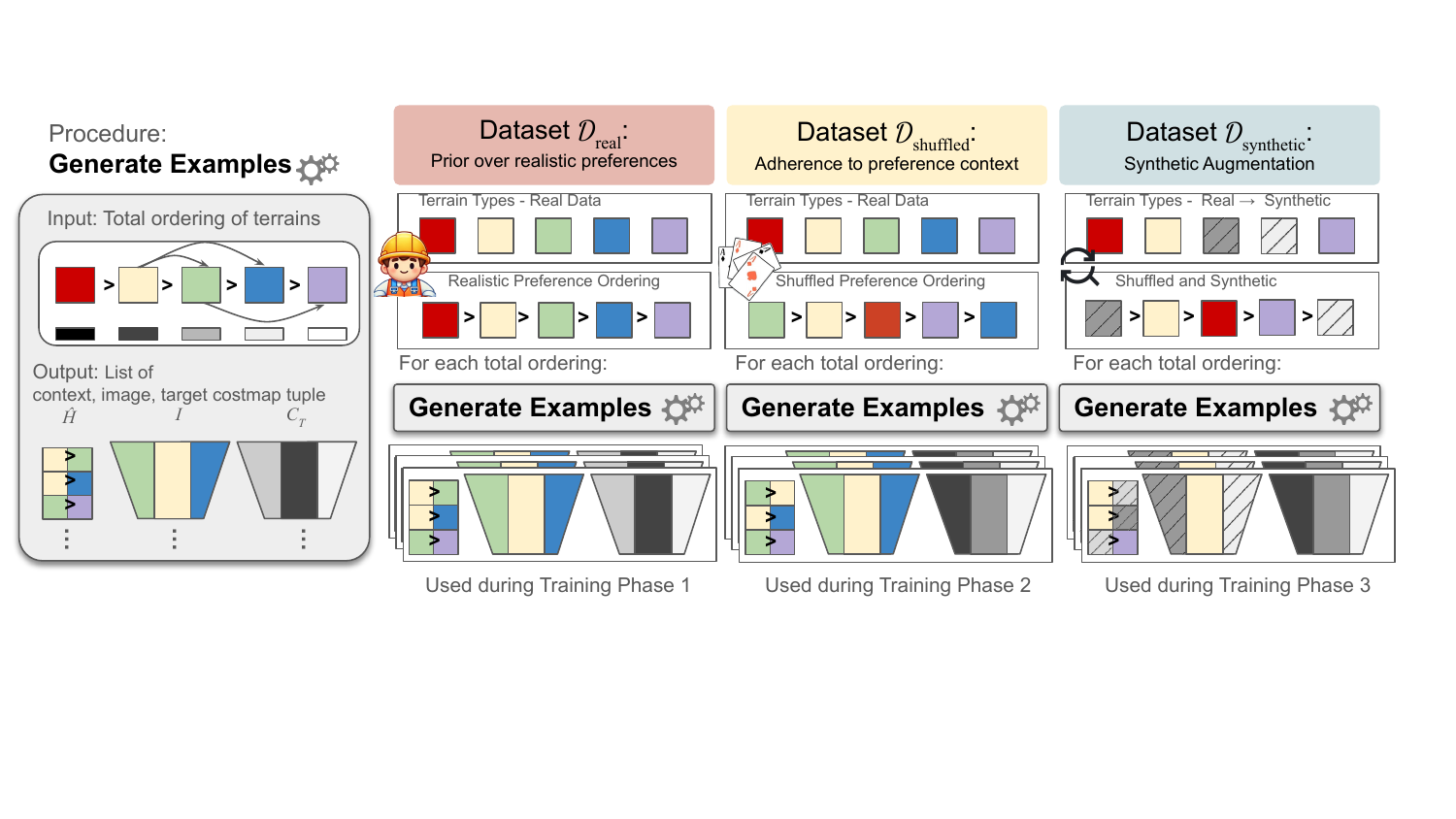}
        \caption{Overview of the dataset structure. Each training example contains a preference context, image, and target costmap. We vary the preferences and images, resulting in a large combinatorial dataset despite the relatively small amount of real recorded data. In a later training phase, we also augment with synthetic data by artificially finding and replacing certain terrain types with synthetic terrain textures. The real-valued costs assigned to terrain types based on an input total ordering are shown in the Generate Examples procedure, where black represents low cost and white is high cost.}
        \label{fig:dataset}
        \vspace{-0.6cm}
    \end{figure*}

We now describe the dataset curation and training process for \textsc{pacer}.
\textsc{pacer} is trained using three distinct phases of supervised machine learning, each corresponding to a unique training dataset that corresponds to one of the desiderata described in Section \ref{sec:introduction}.
In what follows, we will first describe how we generate training examples, then describe each of the three training phases and the training procedure.

\vspace{0.25cm}
\subsection{Training Example Generation}
The datasets $\mathcal{D}$ we use to train the \textsc{pacer} model consist of tuples of preference contexts, images, and target costmaps $(\hat{H},I,C_T)$.
To construct these datasets, we bootstrap off of semantic terrain classification and use a pretrained terrain patch classifier that assigns one of $L$ predefined semantic labels to a given terrain image patch.

The inputs to the training example generation process are a single image $I$ along with a total ordering over terrain types $\tau_1 \succ \tau_2 \succ \ldots \succ \tau_L$, where each terrain type $\tau_l$ corresponds to a bank of image patches.
\textsc{pacer} assumes that the cost value associated with $\tau_l$ is given by $\humanFunc{}(\tau_l) = \frac{l-1}{L-1}$.
The bank corresponding to $\tau_l$ consists of patches of that type extracted from images collected during robot deployment.

We use these inputs to generate $\hat{\humanFunc{}}$ and $\costmap{}_T$.
To generate $\hat{\humanFunc{}}$, we first choose $n$ ordered pairs from the total ordering over the $L$ terrain types without replacement.
For each of the resulting ordered pairs, we sample uniformly at random patches from the corresponding patch banks, and use these $2n$ patches to construct $\hat{\humanFunc{}}$ according to the process detailed in Section \ref{sec:preference_context} above.
To generate $\costmap{}_T$, we perform semantic segmentation on $I$ and transform the segmented image into $C_T$ by setting the cost for a pixel labelled $l$ to be $\humanFunc{}(\tau_l)$.

Constructing training examples in this way encourages $\pacerFunction{}_{\phi*}$ to follow our necessary conditions.
First, because $\costmap{}_T$ assigns the same cost value to image locations that received the same semantic label, $\pacerFunction{}_{\phi*}$ is encouraged to identify regions of visually-similar terrain and assign \emph{equivalent} costs within the region, as per condition \eqref{eq:first}.
Second, because both $\hat{\humanFunc{}}$ and $\costmap_T$ are, by construction, consistent with $\humanFunc{}$, $\pacerFunction{}_{\phi*}$ is encouraged to predict costmaps given $\hat{\humanFunc{}}$ which preserve the partial ordering of $\humanFunc{}$, as per condition \eqref{eq:second}. Interestingly, assuming sequential segmented images are temporally consistent, we observe that $\pacerFunction{}_{\phi*}$ is encouraged to be viewpoint-invariant.

During inference time, there are no semantic labels and only the visual appearances of terrains are considered.

\subsection{Dataset Size}

The size of the space from which we sample data is very large. From a total ordering of $L$ \edit{discrete terrain types}, there are $m = \binom{L}{2}$ different ordered pairs of terrains and $\binom{m}{n}$ different sets of $n$ pairs. For each set of $n$ pairs, there $n!$ ways to shuffle the pairs to construct the preference context, yielding $\binom{m}{n}\cdot n!$ possible preference contexts. Moreover, for each terrain type in the preference context, we sample a patch from the bank.  Our dataset contains a bank of around $800$ patches for each terrain label, and about $950$ full images. Therefore, for each total ordering, we have $\binom{\binom{L}{2}}{n}n!$ arrangements of labels into preference contexts, where we sample a patch from a bank of $800$ patches for each terrain type. For $L$ \edit{terrains}, $n = L \log L$ pairs are needed to describe a total ordering, though we evaluate on a smaller $n=3$ pairs due to size considerations for the model and dataset.

\vspace{0.25cm}
\subsection{Training Phases}
Each of the three system desiderata stated in Section \ref{sec:introduction} is manifested in a distinct training phase, each of which utilizes a unique training dataset generated using the procedure described above.
More specifically, these phases generate datasets $\mathcal D_{\mathrm{real}}$, $\mathcal D_{\mathrm{shuffled}}$, and $\mathcal D_{\mathrm{synthetic}}$, which promote adherence to prior preferences in seen terrains, robustness to new preferences, and robustness to new terrains, respectively.

A visualization of each of these phases is given in Figure \ref{fig:dataset}, and we describe each phase in more detail below.

\noindent\textbf{Training Phase 1: Pretraining with Real Data and Realistic Preferences}. To promote a prior towards an overall ``realistic" ordering (as per our first desired property), \textsc{pacer}'s first training phase constructs and utilizes a dataset $\mathcal D_{\mathrm{real}}$ generated using real-world data collected from robot deployments around our campus at The University of Texas at Austin and realistic preferences over terrain classes.
An example of a realistic preference ordering is as follows: $\texttt{concrete}$ \( \succ \) $\texttt{pebble}$ \( \succ \) $\texttt{grass}$ \( \succ \) $\texttt{marble}$ \( \succ \) $\texttt{bush}$. The ``realistic preferences" were defined by the first author according to considerations for robot safety (e.g. preferring grass over loose marble for a wheeled robot) and societal norms (e.g. preferring concrete over grass to avoid trampling lawns, even though both terrains are relatively safe).

\noindent\textbf{Training Phase 2: Augmentation with Changed Preferences}. During deployment in terrains not seen during training, the robot should adhere to preferences given by the operator (as per our second desired property).
Even when operator preferences contradict “realistic preferences”, the robot should follow operator preferences over learned priors.
To encourage this adherence to the ordering in the preference context, we train using the same real data but with changed preferences on a smaller corpus of data by using a randomly-permuted total ordering over terrain labels.

\noindent\textbf{Training Phase 3: Augmentation with Synthetic Terrains}. To promote the model's ability to generalize to terrains unseen during training (as per our third desired property), we further train with synthetically augmented data.

We pick a random subset of terrains to replace and randomly permute the preference order.
An image containing at least one such terrain is selected, and those terrains are artificially replaced with terrain textures from an open-source database \cite{polyhaven} using dense segmentation. \edit{We used 14 synthetic textures}. The training example is formed with a preference context (where terrains have been replaced), the image, and a costmap with costs reassigned according to the new preference order.

\edit{In the first phase, only $\mathcal D_{\mathrm{real}}$ is used for training examples. In the second, both $\mathcal D_{\mathrm{real}}$ and $\mathcal D_{\mathrm{shuffled}}$ are used. In the third, all three datasets are used. Within a phase, training examples are drawn uniformly among the datasets used}. Before training on $\mathcal D_{\mathrm{real}}$, weights are initialized randomly. After completing a training phase, we switch to the next \edit{phase} starting from the previous trained weights. \edit{We trained for 100 epochs in phase 1, 5 in phase 2, and 100 in phase 3.}

\section{Experiments}
\label{sec:experiments}



To evaluate \textsc{pacer}, we seek to answer the following questions empirically:

\begin{enumerate} [label=\bfseries\arabic*]
    \item How effectively is the robot able to navigate in terrains \textit{seen during training} when the preference context contains \textbf{(a)} only seen terrains or \textbf{(b)} only previously unseen terrains?
    \item How effectively is the robot able to navigate in \textit{unseen} terrains when the preference context contains \textbf{(a)} only those unseen terrains or \textbf{(b)} only seen terrains?
\end{enumerate}

By dividing deployment scenarios into the four situations above, we will be able to understand the performance of \textsc{pacer} under the four ways to combine \textit{seen} and \textit{unseen} terrains in the preference context and environment.
In \textbf{1b} and \textbf{2b}, the preference context does not provide information about the terrains appearing in the environment, so the robot must rely on learned priors about realistic cost assignment. \edit{We term this scenario as having an \emph{uninformative context}.}

Evaluations are performed using simulated experiments on an aerial map. In a later section, we also provide results from real robot deployments. We compare against \textsc{sterling} \cite{karnan2023sterling} (a representation learning approach) and a classifier (a semantics-based approach) as baselines. \edit{The classifier is the same as used generate the training data for \textsc{pacer}.} The same model for \textsc{pacer} is utilized across all environments. No retraining or fine-tuning is done. Additionally, a single context is used for the duration of a simulated deployment in an environment (i.e. the context is not switched out midway through the path-planning).

To quantify the navigation performance in our experiments, we posit that factors such as the distance traversed or closeness to a human-defined trajectory do not matter as much as traveling on only the preferred terrains. We therefore assign terrain types in each experiment a low, medium, and high cost, \edit{according to the human deployer's preference} and report the proportion of the planned path which \edit{belonging to each tier.} \edit{The assignment of semantic terrain types to these three tiers is based on a hidden total ordering of all the terrains which appear in the environment. Each of the methods and baselines tested are deployed with a preference (provided according to their respective representations) consistent with the hidden total ordering.} 
Note that we have purposely chosen this metric to be different from the commonly-used Hausdorff distance between the planned trajectory and one defined by a human operator, which can vary greatly when there are multiple valid paths to the goal.

\begin{figure*}
    \centering
    \vspace{.3cm}
    \includegraphics[width=.9\linewidth, trim=0 0 0 0, clip]{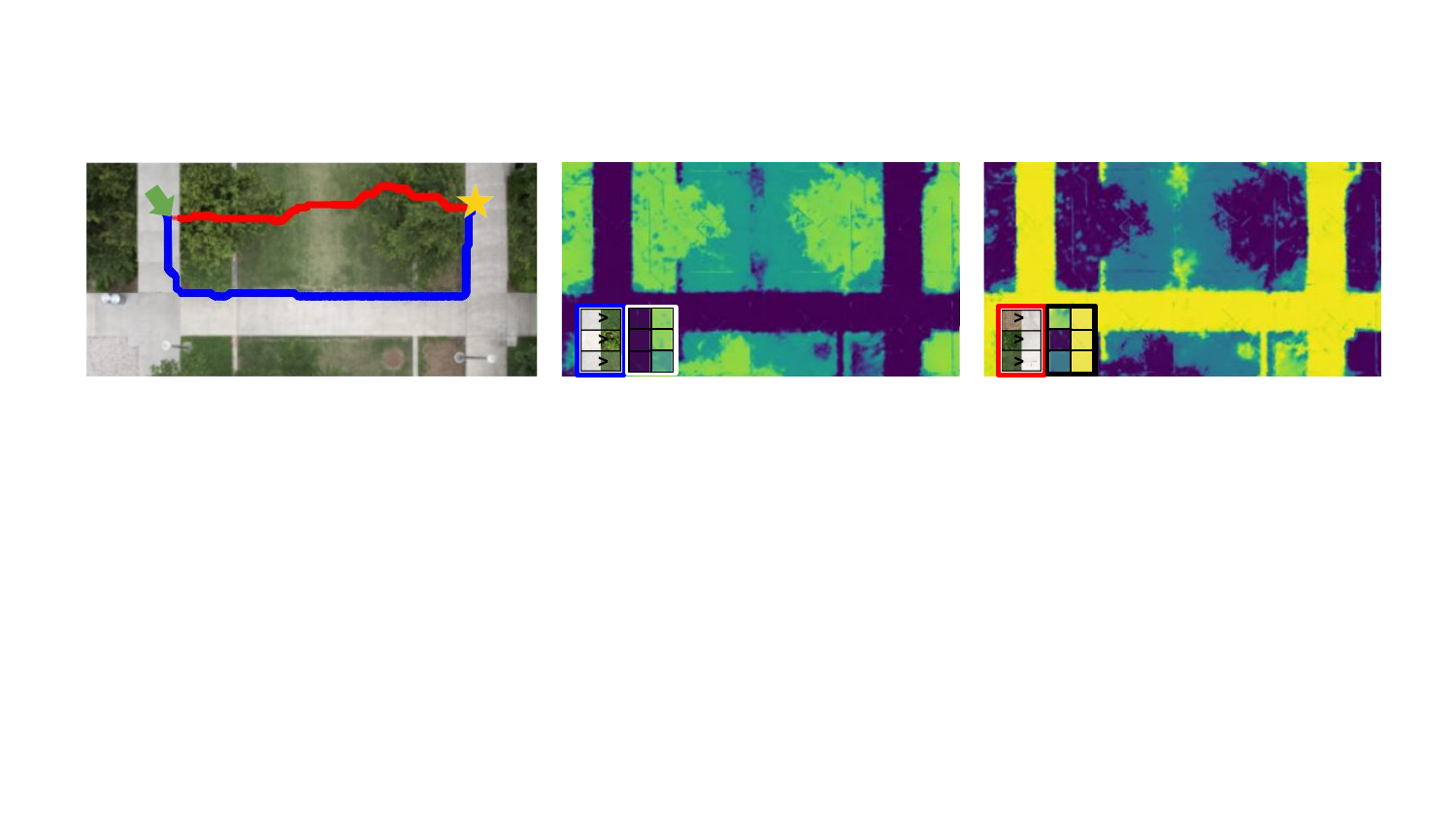}
    \caption{The effect of changing preference on path planning is shown on the left image of an aerial map. The blue path corresponds to the middle costmap and the red path corresponds to the right costmap. The generated costmaps reflect the preferences provided in the context. \edit{Dark purple corresponds to low cost and yellow is high cost.}}
    \label{fig:aerial_map_and_costmaps}
    \vspace{-.4cm}
\end{figure*}

\subsection{Aerial Map Experiments}
 In our simulated experiments, we build aerial maps from drone footage of three locations around our campus, which we consider seen environments. We also use open-source aerial maps \cite{openaerialmap} from around the world, covering a wide variety of both urban and natural terrain types and which we consider unseen. For each of the seen and unseen environments, we provide a start and goal location and test varying operator preferences. We test realistic preferences, and ``inverted" preferences, in which each of the pairwise orderings in the realistic preference are reversed. Given the robot’s pose on the aerial map, the robot’s projected bird’s eye view can be found, and used as input to generate the local costmap. Planning is done using the A* algorithm \cite{Hart1968} on the costmap. \edit{In Fig. \ref{fig:sim_paths}, we show examples of paths planned using \textsc{pacer} in our aerial map simulator. These experiments are an evaluation purely over planning based on terrain preference, with no kinodynamic constraints, no errors due to localization, and no costs associated with elevation.}

 \begin{figure}
        \centering

        \includegraphics[width=0.9\linewidth, trim=0 8 0 0, clip]{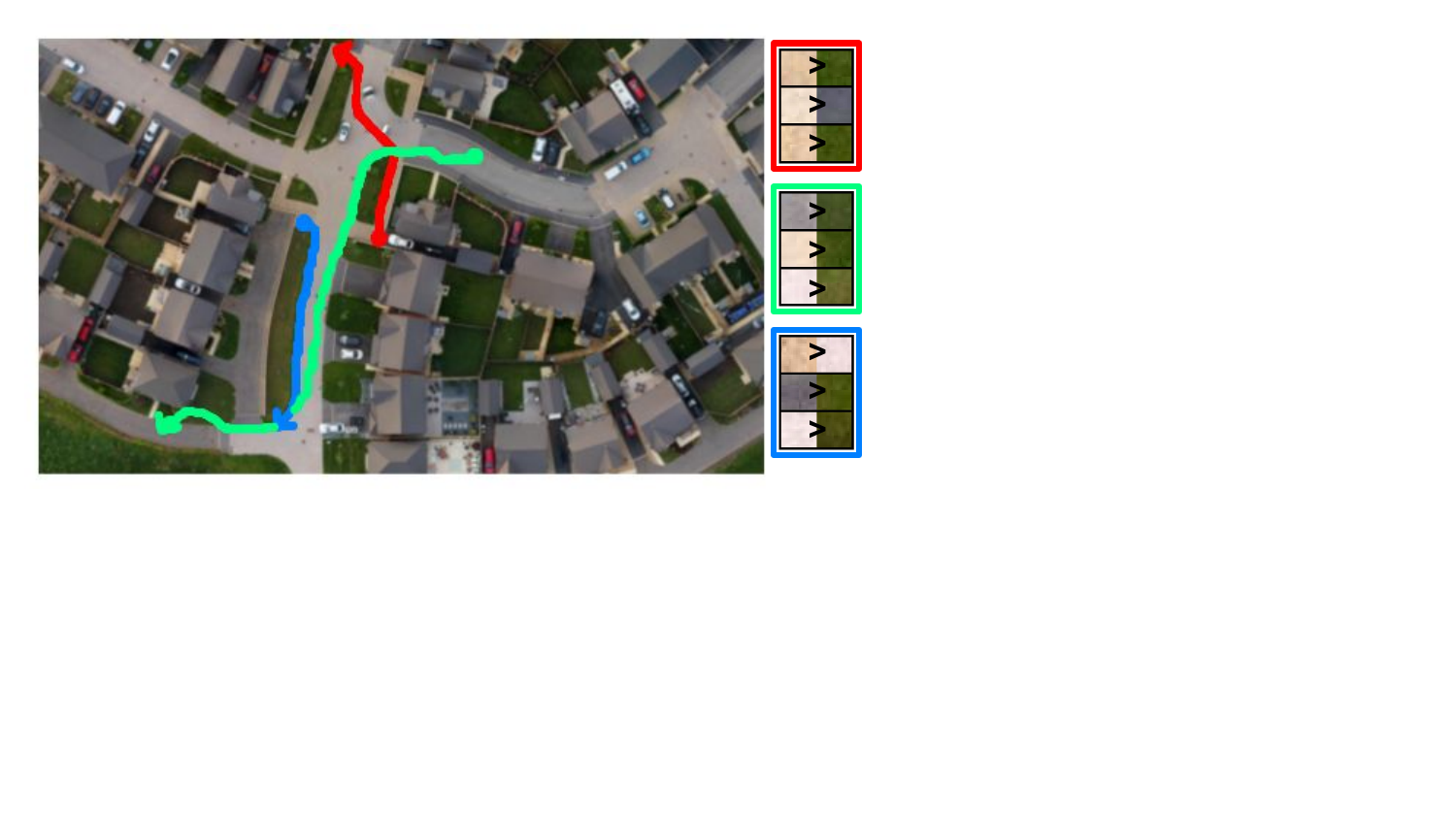}
        \caption{Examples of several paths planned using our method in an urban unseen environment and their corresponding preference contexts. Here, we visualize the large scale of these simulated deployments, and the diversity of visual terrain appearances. }
        \label{fig:sim_paths}
        \vspace{-.6cm}
    \end{figure}

 Tab. \ref{tab:seen_environment_experiments} displays the results for \textit{seen} environments. Towards answering question \textbf{1a}, \textsc{pacer} has similar results to the \textsc{sterling} baseline, as both approaches were trained on the same in-distribution data. When no useful context has been provided for \textsc{pacer} (i.e. the context contains only unseen terrains which do not match the environment as per \textbf{1b}), the results are on par with the classifier baseline. \textsc{pacer}'s success despite the lack of an informative preference context shows that the model has captured a prior over realistic cost-assignment for in-distribution terrains.

 Tab. \ref{tab:unseen_environment_experiments} displays the results for \textit{unseen} environments. Towards answering question \textbf{2a}, when given an informative context, \textsc{pacer} outperforms the \textsc{sterling} baseline. Though representation-learning approaches like \textsc{sterling} should theoretically generalize due to their continuous representation space, this is contingent on similar terrains forming clusters in this space, which may not be the case for unseen terrains. Fapt install r-base-coreor unseen environments, the classifier baseline has been omitted, as it allows no way for a user to provide terrain preferences for classes that are not predefined. \textsc{pacer} overcomes the limitations of previous paradigms, as it both allows preferences in unseen environments to be expressed and generalizes well to these unseen environments. Additionally, when no useful context has been provided for the unseen terrains (per \textbf{2b}), \textsc{pacer} is not able to adapt.

\begin{table}[tb]
\centering

\begin{minipage}{0.45\textwidth}
\centering
\resizebox{\textwidth}{!}{
\begin{tabular}{|l|c|c|c|}
\hline
\textbf{Method} & \textbf{Low (\%)} & \textbf{Medium (\%)} & \textbf{High (\%)} \\
\hline
\textsc{pacer} & \cellcolor{green!70}73.83\% & \cellcolor{yellow!32}19.70\% & \cellcolor{orange!10}6.47\% \\
\textsc{pacer} (uninformative context) & \cellcolor{green!50}67.97\% & \cellcolor{yellow!20}15.89\% & \cellcolor{orange!35}16.15\% \\
\textsc{sterling} & \cellcolor{green!75}74.72\% & \cellcolor{yellow!30}18.65\% & \cellcolor{orange!20}6.63\% \\
Classifier & \cellcolor{green!40}65.86\% & \cellcolor{yellow!50}21.46\% & \cellcolor{orange!30}12.69\% \\
\hline
Ground Truth & \cellcolor{green!90}82.82\% & \cellcolor{yellow!15}13.69\% & \cellcolor{orange!10}3.49\% \\
\hline
\end{tabular}
}
\caption{Proportion of planned paths that traverse low, medium, and high-cost terrains in seen environments, relating to \textbf{1a,1b}.
}
\label{tab:seen_environment_experiments}
\end{minipage}
\vspace{.5cm}

\begin{minipage}{0.45\textwidth}
\centering
\resizebox{\textwidth}{!}{
\begin{tabular}{|l|c|c|c|}
\hline
\textbf{Method} & \textbf{Low (\%)} & \textbf{Medium (\%)} & \textbf{High (\%)} \\
\hline
\textsc{pacer} & \cellcolor{green!80}81.85\% & \cellcolor{yellow!30}8.00\% & \cellcolor{orange!20}10.15\% \\
\textsc{pacer} (uninformative context) & \cellcolor{green!30}53.83\% & \cellcolor{yellow!10}2.20\% & \cellcolor{orange!70}43.97\% \\
\textsc{sterling} & \cellcolor{green!45}61.39\% & \cellcolor{yellow!40}10.12\% & \cellcolor{orange!50}26.86\% \\
\hline
Ground Truth & \cellcolor{green!95}91.85\% & \cellcolor{yellow!20}6.02\% & \cellcolor{orange!10}2.13\% \\
\hline
\end{tabular}
}
\caption{Proportion of planned paths that traverse low, medium, and high-cost terrains in unseen environments, relating to \textbf{2a,2b}.}
\label{tab:unseen_environment_experiments}
\end{minipage}
\vspace{-.5cm}
\end{table}

\begin{table}[tb]
\centering
\vspace{.3cm}

\begin{minipage}{0.45\textwidth}
\centering
\resizebox{\textwidth}{!}{
\begin{tabular}{|l|c|c|c|}
\hline
\textbf{In-Dist. Scenario} & \textbf{Low (\%)} & \textbf{Medium (\%)} & \textbf{High (\%)} \\
\hline
\textbf{Realistic Preference} & & & \\
$\mathcal{D}_\mathrm{real}$ & \cellcolor{green!95}95.97\% & \cellcolor{yellow!10}1.70\% & \cellcolor{orange!15}2.32\% \\
$\mathcal{D}_\mathrm{shuffled}$ & \cellcolor{green!60}70.50\% & \cellcolor{yellow!60}18.94\% & \cellcolor{orange!50}10.55\% \\
$\mathcal{D}_\mathrm{synthetic}$ & \cellcolor{green!70}76.66\% & \cellcolor{yellow!58}18.43\% & \cellcolor{orange!20}4.91\% \\
\hline
\textbf{Inverted Preference} & & & \\
$\mathcal{D}_\mathrm{real}$ & \cellcolor{green!15}\textcolor{blue}{18.55\%} & \cellcolor{yellow!50}\textcolor{blue}{20.97\%} & \cellcolor{orange!90}\textcolor{blue}{60.48\%} \\
$\mathcal{D}_\mathrm{shuffled}$ & \cellcolor{green!55}\textcolor{blue}{69.21\%} & \cellcolor{yellow!55}\textcolor{blue}{22.56\%} & \cellcolor{orange!30}\textcolor{blue}{8.23\%} \\
$\mathcal{D}_\mathrm{synthetic}$ & \cellcolor{green!60}\textcolor{blue}{70.22\%} & \cellcolor{yellow!53}\textcolor{blue}{21.32\%} & \cellcolor{orange!35}\textcolor{blue}{8.46\%} \\
\hline
\end{tabular}
}
\caption{Proportion of planned paths that traverse low, medium, and high-cost terrains in seen environments for each training phase. The model after each phase of training is tested using realistic and  inverted contexts.}
\label{tab:seen_ablation}
\end{minipage}
\vspace{.3 cm}

\begin{minipage}{0.45\textwidth}
\centering
\resizebox{\textwidth}{!}{
\begin{tabular}{|l|c|c|c|}
\hline
\textbf{Out-of-Dist. Scenario} & \textbf{Low (\%)} & \textbf{Medium (\%)} & \textbf{High (\%)} \\
\hline
\textbf{Realistic Preference} & & & \\
$\mathcal{D}_\mathrm{real}$ & \cellcolor{green!60}{60.78\%} & \cellcolor{yellow!20}7.05\% & \cellcolor{orange!70}32.18\% \\
$\mathcal{D}_\mathrm{shuffled}$ & \cellcolor{green!65}65.16\% & \cellcolor{yellow!15}6.44\% & \cellcolor{orange!65}28.40\% \\
$\mathcal{D}_\mathrm{synthetic}$ & \cellcolor{green!88}88.09\% & \cellcolor{yellow!5}2.57\% & \cellcolor{orange!25}9.34\% \\
\hline
\textbf{Inverted Preference} & & & \\
$\mathcal{D}_\mathrm{real}$ & \cellcolor{green!25}\textcolor{blue}{22.59\%} & \cellcolor{yellow!50}\textcolor{blue}{15.31\%} & \cellcolor{orange!90}\textcolor{blue}{62.10\%} \\
$\mathcal{D}_\mathrm{shuffled}$ & \cellcolor{green!40}\textcolor{blue}{39.86\%} & \cellcolor{yellow!45}\textcolor{blue}{13.96\%} & \cellcolor{orange!80}\textcolor{blue}{46.17\%} \\
$\mathcal{D}_\mathrm{synthetic}$ & \cellcolor{green!65}\textcolor{blue}{65.65\%} & \cellcolor{yellow!70}\textcolor{blue}{22.10\%} & \cellcolor{orange!30}\textcolor{blue}{12.25\%} \\
\hline
\end{tabular}
}
\caption{Proportion of planned paths that traverse low, medium, and high-cost terrains in unseen environments for each training phase. The model after each phase of training is tested using realistic and inverted contexts.}
\label{tab:unseen_ablation}
\vspace{-.8cm}
\end{minipage}

\end{table}

\subsection{Ablation Study}
To understand the effects of each phase in the training process, we perform an ablation study using the same environments as in the aerial simulator experiments. In the $\mathcal D_{\mathrm{real}}$ phase, the model is trained only on real data and realistic preferences. In the $\mathcal D_{\mathrm{shuffled}}$ phase, the model is pretrained on real data with realistic preferences, and then trained on a smaller amount of changed preferences. In $\mathcal D_{\mathrm{synthetic}}$ phase, the model is trained according to all three phases.

Results are shown in Tab. \ref{tab:seen_ablation}, \ref{tab:unseen_ablation}. In seen environments, the model trained on $\mathcal D_{\mathrm{real}}$ performs the best of the three with realistic preferences, but is unable to adapt when the preferences are inverted. \edit{Results for realistic preferences (black) are significantly better than those for inverted preference (blue) for this method in Tab. \ref{tab:seen_ablation}}.
In unseen environments \edit{(Tab. \ref{tab:unseen_ablation})}, this method performs the worst regardless of preference context. The model trained on $\mathcal D_{\mathrm{shuffled}}$ is able to adapt to changing preference orderings in seen environments, \edit{seen by comparing results for realistic preferences (black) to inverted preferences (blue) in Tab. \ref{tab:seen_ablation}}. However, it is unable to recognize and match new terrains in the preference context to the new environment \edit{as evidenced by the drop in performance from Tab. \ref{tab:seen_ablation} to Tab. \ref{tab:unseen_ablation}}. The model trained on  $\mathcal D_{\mathrm{synthetic}}$ is shown to both respect changing preference order and recognize new terrains. \edit{Our findings indicate that training the model to respond to preferences reduces performance with realistic preferences in seen environments, as $D_{\mathrm{shuffled}}$ and $D_{\mathrm{synthetic}}$ have slightly poorer performance than $D_{\mathrm{real}}$ with realistic but far better performance with inverted preferences. Furthermore, performance across all three methods decreases from seen to unseen since the scenario is harder, but $D_{\mathrm{synthetic}}$ has the least drop in performance. Variable preferences and unseen environments introduce greater complexity, requiring models to adapt and generalize. We prioritize these harder, practical scenarios, accepting a trade-off with peak accuracy in simpler settings.}

\edit{\subsection{Performance on Test Set}
There are infinite costmaps which may satisfy a preference over terrains (given as a partial order of terrains), since the scale of the numerical costs do not matter as long as the ordering is followed. Therefore, rather that direct comparison with a ``ground-truth" costmap, we report the \textit{margin ranking error} of predicted costmaps using holdout set from each dataset in Tab. \ref{tab:margin_ranking_error_table}. We sample the 500 points across the costmap, then for each pair of points, calculate the error as 0 if the relative magnitude of the two costs is correct. Otherwise the error is the difference between the two costs. We report the mean error for each category.}

\edit{For each dataset, the model which was trained on that dataset had the lowest error. Across all datasets, $\mathcal D_{\mathrm{synthetic}}$ had the lowest or second-lowest error, and had therefore the overall best performance.}





\begin{table}
\begin{minipage}{0.45\textwidth}
\vspace{.3cm}
\centering
\resizebox{.7\textwidth}{!}{
\begin{tabular}{|l|c|c|c|}
\hline
\textbf{Test Dataset} & {$\mathcal D_{\mathrm{real}}$} & {$\mathcal D_{\mathrm{shuffled}}$} & {$\mathcal D_{\mathrm{synthetic}}$} \\
\hline
\textbf{Model} & & & \\
{$\mathcal D_{\mathrm{real}}$} & \cellcolor{white}0.017 & \cellcolor{white}0.099 & \cellcolor{white}0.072 \\
{$\mathcal D_{\mathrm{shuffled}}$} & \cellcolor{white}0.086 & \cellcolor{white}0.019 & \cellcolor{white}0.064 \\
{$\mathcal D_{\mathrm{synthetic}}$} & \cellcolor{white}0.053 & \cellcolor{white}0.069 & \cellcolor{white}0.013 \\
\hline
\end{tabular}
}
\caption{Margin ranking error for each of the three models on the test set of each of the three datasets. Lower error is better.}
\label{tab:margin_ranking_error_table}
\end{minipage}
\vspace{-.6cm}
\end{table}

\subsection{Discussion}

    The results presented in this section demonstrate that \textsc{pacer} fulfills our three design desiderata. The adherence of \textsc{pacer} to realistic preferences when not given an informative preference context fulfills the first key property of being able to make inferences when there is no context by capturing a prior over realistic preferences (\textbf{1b}). As per the second key property, \textsc{pacer} has been shown to align costs to preferred terrains even as preferences are varied (\textbf{1a}). An example of paths planned according to different preferences is shown in Fig. \ref{fig:aerial_map_and_costmaps}. The performance of \textsc{pacer} in both seen and unseen environments when  given an informative preference context shows that \textsc{pacer} exhibits the third key property (\textbf{1a}, 
    \textbf{2a}).

\section{Real Robot Experiments}
We now seek to demonstrate that \textsc{pacer} performs well during execution in the real world. We deploy our method, \textsc{sterling}, and the classifier approach on a mobile robot at four different locations on the UT campus which are not included in the simulated environments. These four locations cover \texttt{red brick}, \texttt{concrete sidewalk}, \texttt{grass}, \texttt{mulch}, and \texttt{pebble pavement}.

Since all methods are trained to be view-point invariant and platform-agnostic, we trained them all with the same data collected from a Boston Dynamics Spot, and deployed zero-shot on a Clearpath Jackal which has significant differences in viewpoint and mobility than the Spot. We evaluate the performance of each method with a realistic preference on a variety of terrain types. In these experiments, we measure the robot's ability to execute the plan, which includes robustness to a different viewpoint and platform. We integrate each method with a sampling-based local planner \cite{biswas2013amrl} and maintain the same planner parameters to ensure fairness.

Tab. \ref{tab:performance} shows results from real robot experiments. Our approach had the most successful trials across all environments. While the classifier performed well in environments 1 and 2, we hypothesize that difficult lightning conditions and variations in terrain appearance caused the failures in environments 3 and 4. Though \textsc{sterling} performed as the best baseline in the simulated experiments (which involved only planning), it seemed to be unable to execute these plans in real-robot experiments. Many of the failure cases involved the robot driving slightly off-path and just grazing the undesirable terrains. In patch-based representations, a single patch may contain multiple different terrains (e.g. half sidewalk and half grass), so the cost assigned to the patch would be some combination of the different terrain costs, resulting in a coarser degree of control on the physical robot. Our approach overcomes this limitation since it directly outputs a fine-grained costmap in a single forward pass.

\begin{figure}[t]
        \vspace{.3cm}
        \centering
        \includegraphics[width=\linewidth]{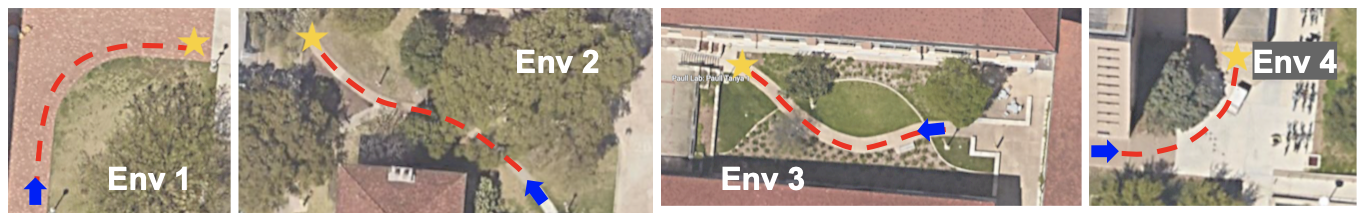}
        \caption{The four environments where real-robot trials were performed. Blue arrows and yellow stars show start and goal locations respectively. The red dashed line marks the intended path based on operator preference.}
        \label{fig:real_envs}
        \vspace{-.2cm}
    \end{figure}

\begin{table}[tb]
    \centering
    \begin{tabular}{>{\raggedright\arraybackslash}p{2.5cm} cccc}
        \toprule
        \textbf{Environment} & \textbf{Env 1} & \textbf{Env 2} & \textbf{Env 3} & \textbf{Env 4} \\
        \midrule
        \textbf{Classifier} & \textbf{5/5} & 4/5 & 1/5 & 2/5 \\
        \textbf{\textsc{sterling}}   & 0/5 & 2/5 & 0/5 & \textbf{4/5} \\
        \textbf{Ours}       & \textbf{5/5} & \textbf{5/5} & \textbf{5/5} & \textbf{4/5} \\
        \bottomrule
    \end{tabular}
    \caption{Number of successes per 5 trials of different approaches across various environments. A trial is a success if the robot reaches the goal without traversing across undesirable terrain and without operator intervention.}
    \label{tab:performance}
    \vspace{-.5cm}
\end{table}


\section{Limitations and Future Work}
 \label{sec:limitations}
 The experiments reported in this paper were conducted on a single campus with one robot. Future work should extend the study to more varied terrains and deploy multiple robots with diverse sensors. Additionally, we aim to evaluate the ease with which human users can express preferences using our method, as well as explore alternative mechanisms for preference expression in costmap generation. \edit{Incorporating a variable-length preference context would also be an interesting direction for future work, enabling greater flexibility and adaptability in representing user preferences. Exploring other model architectures could further enhance the system's robustness and performance in diverse scenarios.} Finally, incorporating depth data into the costmap generation is another key area for improvement, as our current reliance on a homography transformation assumes flat ground, leading to limitations like the inability to avoid obstacles such as concrete curbs, despite their preferable terrain.


\section{Conclusion}
\label{sec:conclusion}

	In this paper we presented \textsc{pacer}, a novel architecture and training approach to quickly produce costmaps according to arbitrary user preferences and new terrains with no fine-tuning. Our approach was evaluated against semanics-based and representation-learning baselines in both simulated and real robot experiments. We have shown this approach to be highly adaptable to new preferences and terrains, as well as able to infer the traversability of some terrains according to realistic preferences.

\section{Acknowledgements}
\label{sec:acknowledgements}
{
This work has taken place in the Autonomous Mobile Robotics Laboratory (AMRL) at UT Austin. AMRL research is supported in part by NSF (CAREER-2046955, OIA-2219236, DGE-2125858, CCF-2319471), ARO (W911NF-23-2-0004), Amazon, and JP Morgan. Any opinions, findings, and conclusions expressed in this material are those of the authors and do not necessarily reflect the views of the sponsors.}

{A portion of this work has taken place in the Learning Agents Research Group (LARG) at UT Austin.  LARG research is supported in part by NSF (FAIN-2019844, NRT-2125858), ONR (N00014-18-2243), ARO (W911NF-23-2-0004, W911NF-17-2-0181), Lockheed Martin, and UT Austin's Good Systems grand challenge.  Peter Stone serves as the Executive Director of Sony AI America and receives financial compensation for this work.  The terms of this arrangement have been reviewed and approved by the University of Texas at Austin in accordance with its policy on objectivity in research.}

\begingroup
\scriptsize
\bibliographystyle{IEEEtran}
\bibliography{citations}
\endgroup
\end{document}